\newtheorem{theorem}{Theorem}
\newtheorem{lemma}{Lemma}[section]
\newtheorem{definition}{Definition}
\newtheorem{proposition}{Proposition}[section]
\newtheorem{assumption}{Assumption}[section]
\newtheorem{corollary}{Example}[section]
\newtheorem{claim}{Claim}[theorem]
\DeclareMathOperator{\Tr}{Tr}
\begin{document}



\title{A Generative 
Model for Controllable Feature Heterophily in Graphs}

\name{Haoyu Wang$^{1}$, Renyuan Ma$^{2}$, Gonzalo Mateos$^{3}$, and Luana Ruiz$^{1}$}

\address{
$^{1}$ Dept. of Applied Mathematics and Statistics, Johns Hopkins University, Baltimore, MD, USA\\
$^{2}$ Dept. of Statistics and Data Science, Yale University, New Haven, CT, USA\\
$^{3}$ Dept. of Electrical and Computer Engineering, University of Rochester, Rochester, NY, USA
}

\maketitle

\begin{abstract}
We introduce a principled generative framework for graph signals that enables explicit control of feature heterophily, a key property underlying the effectiveness of graph learning methods. Our model combines a Lipschitz graphon-based random graph generator with Gaussian node features filtered through a smooth spectral function of the rescaled Laplacian. We establish new theoretical guarantees: (i) a concentration result for the empirical heterophily score; and (ii) almost-sure convergence of the feature heterophily measure to a deterministic functional of the graphon degree profile, based on a graphon-limit law for polynomial averages of Laplacian eigenvalues. These results elucidate how the interplay between the graphon and the filter governs the limiting level of feature heterophily, providing a tunable mechanism for data modeling and generation. We validate the theory through experiments demonstrating precise control of homophily across graph families and spectral filters.
\end{abstract}

\begin{keywords}
graph generative models, homophily, graphons
\end{keywords}

\section{Introduction}

The success of many graph information processing problems, including node-level tasks in graph machine learning \cite{zhu2020beyond,kipf17-classifgcnn} and network topology inference \cite{GM6,GM3,GM5}, hinges on the alignment between graph topology and node features, often summarized by the notion of homophily or heterophily. We develop a generative framework for graphs and node features (i.e., graph signals) that allows explicit control of \textit{feature heterophily} in the range from homophily to heterophily. Feature heterophily extends the usual notion of homophily (nodes with the same labels are more likely to establish relational ties) \cite{mcpherson2001birds} to arbitrary feature vectors.
Our model combines graphs sampled from a graphon \cite{lovasz2012large,borgs2017graphons,eldridge2016graphons,airoldi2013stochastic} with features generated by a stationary graph signal model, where white Gaussian noise is transformed by a polynomial graph filter \cite{marques2017stationary}.

Our theoretical contributions are twofold. 
First, we establish concentration of the empirical feature heterophily around a deterministic spectral quantity, showing that randomness in feature generation becomes negligible as the graph grows. 
Second, we prove almost-sure convergence of the heterophily score to a deterministic functional of the graphon degree profile, obtained through a graphon-limit law for polynomials of Laplacian eigenvalues. 
Together, these results reveal a simple and interpretable mechanism for heterophily control in graph generation via a combined graphon and stationary signal model. They further show that, in this combined model, heterophily is governed jointly by the graphon and the polynomial filter giving rise to the features. Notably, the dependence on the graphon is only through its degree function.

We validate both the concentration and convergence results with numerical experiments on synthetic graphons and filters, confirming the predicted behavior across graph families. Beyond validating the theory, these experiments highlight the potential practical impact of our framework as a controllable benchmark generator---by varying the underlying graphon and spectral filter, one can systematically produce graph–feature pairs spanning a wide range of homophily levels. In this sense, our approach plays a role analogous to synthetic benchmarking environments such as GraphWorld \cite{palowitch2022graphworld}, but we distinctly offer a principled mechanism grounded in graphon limits and stationary signal models.

\section{Preliminaries}
\subsection{Graph and graph signals}

A graph $G=(V,E)$ consists of two components: a set of vertices or nodes $V$, and a set of edges $E \subseteq V \times V$. 
Graphs can be categorized as either directed or undirected depending on the structure of their edge set $E$. 
A graph is undirected if and only if for any two nodes $u,v \in V$, $(u,v) \in E$ also implies $(v,u) \in E$, and both correspond to the same undirected edge. 
In this work, we restrict attention to undirected graphs.  

Let $|V|=n$ be the number of nodes and $m = |E|$ the number of edges in $G$. 
The \emph{adjacency matrix} is the $n \times n$ matrix $\mathbf{A}$ with entries
\[
\mathbf{A}_{ij} =
\begin{cases}
    1 & \text{if } (i,j) \in E, \\
    0 & \text{otherwise}.
\end{cases}
\]

The \emph{degree matrix} $\mathbf{D}$ is the $n \times n$ diagonal matrix with $\mathbf{D}_{ii} = \sum_{j=1}^n \mathbf{A}_{ij}$. 
The \emph{graph Laplacian} is then defined as $\mathbf{L} = \mathbf{D} - \mathbf{A}$.
Since $G$ is undirected, $\mathbf{A}$ and $\mathbf{L}$ are symmetric.  

A \emph{graph signal} is a vector $x \in \mathbb{R}^n$, where $x_i$ corresponds to the signal value at node $i$ \cite{shuman13-mag,ortega2018graph}. 
More generally, graph signals can have multiple features, represented as matrices $X \in \mathbb{R}^{n \times d}$, with $d$ denoting the feature dimension. In this paper we focus on a stationary signal model for graph signals originally introduced in \cite{marques2017stationary}; see also~\cite{perraudin2017}.

\begin{definition}[Stationary graph signals \cite{marques2017stationary}]
Let $e_1,\ldots, e_n \allowbreak\sim \mathcal{N}(0,I_d)$ be i.i.d. Gaussian vectors, and define the initial feature matrix
$\smash{X_0 = \frac{1}{\sqrt{d}}\,[\,e_1,\ldots,e_n\,]^\top \in \mathbb{R}^{n \times d}}$.
Let $G_n$ be a graph with n nodes sampled from any random graph model.
Denote
$\mathbf{L}_n = \mathbf{D}_n - \mathbf{A}_n$ its unnormalized Laplacian, and define the rescaled Laplacian $\mathcal{L}_n = \mathbf{L}_n/n$.  
The observed feature matrix is then generated by
\[
X = \sum_{k=0}^{K-1} a_k \, \mathcal{L}_n^k X_0 \;=\; f(\mathcal{L}_n)X_0,
\]
where $f(\cdot)$ is a so-called linear shift-invariant filter with $K-1$ coefficients. 
\end{definition}

This model is important for two main reasons.  
First, it reflects the idea that node features are not arbitrary but are \emph{propagated by the structure of the sampled graph}: the polynomial filter $f(\mathcal{L}_n)$ spreads and smooths the initial white noise features $X_0$ according to the connectivity in $G_n$.  
Second, the parameterization by powers of $\mathcal{L}_n$ allows us to control the level of feature smoothness (and thus feature homophily), linking the spectral properties of the Laplacian to the statistical properties of the features.  

\subsection{Graphons}

A \emph{graphon} is a symmetric, bounded, and measurable function $W: [0,1]^2 \to [0,1]$. 
Graphons serve both as generative models and as limits of dense graph sequences, capturing the asymptotic behavior of graphs as their size increases.  
Here we are primarily interested in the use of graphons as generative models. Given a graphon $W$, a random graph $G_n$ with $n$ nodes is sampled by drawing independent latent variables $u_1,\ldots,u_n \sim \mathrm{Unif}[0,1]$, and forming edges between nodes $i$ and $j$ independently with probability $W(u_i,u_j)$. 
This yields a random graph $G_n \sim \mathcal{G}_{W}(n)$ whose structure reflects the properties of $W$.  

An important class of graphons is given by \emph{step-function graphons}, which correspond exactly to (dense) stochastic block models (SBMs). 
Such a graphon, denoted $W_{\mathbf{P}}$, is defined as
\[
    W_{\mathbf{P}}(x,y) = P_{ij} \quad \text{if } x \in I_i,\, y \in I_j,
\]
where $\{I_1, \ldots, I_r\}$ is a partition of $[0,1]$ with $\mu(I_i) = n_i/n$. 
Step-function or SBM graphons are the canonical example of graphons of finite rank. Though the graphon acts as the limit of graphs with infinitely many nodes, it may still have low intrinsic dimensionality, which in the case of SBMs is given by the finite number of communities.

We state the following theorem, due to \cite{vizuete2020laplacian}, proving convergence of the Laplacian spectrum of $G_n \sim \mathcal{G}_{W}(n)$.

\begin{figure*}[t]
    \centering
    \includegraphics[width=0.48\textwidth,height=4.8cm]{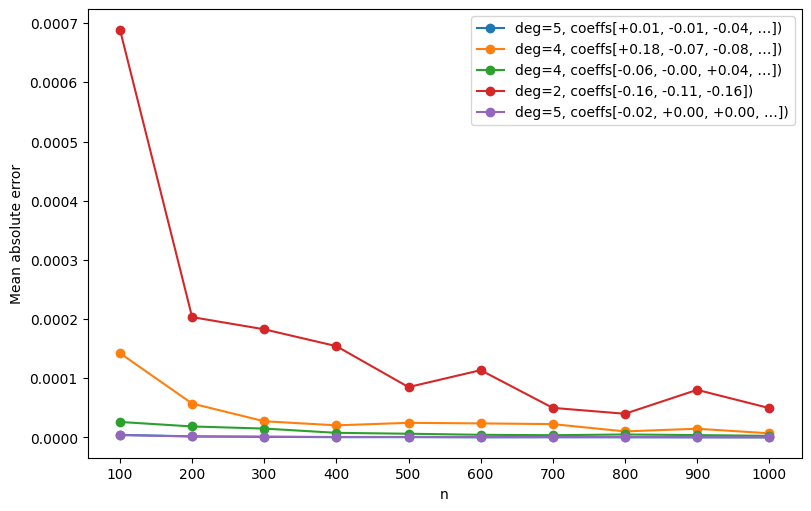}
    \hspace{1cm}
    \includegraphics[width=0.42\textwidth,height=4.8cm]{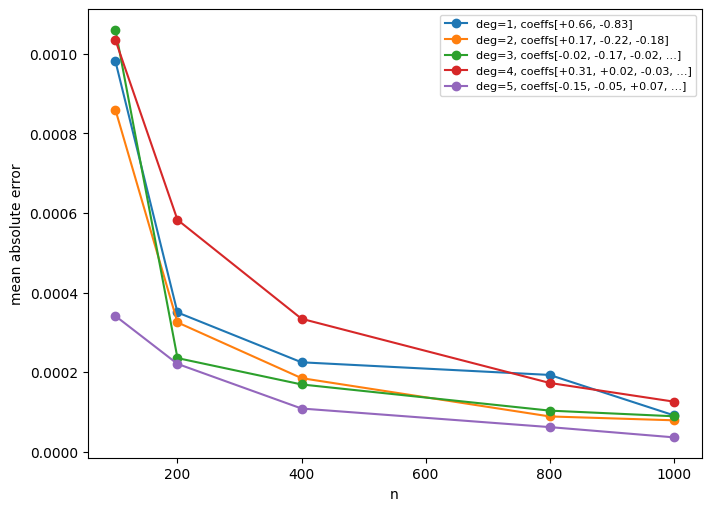}
    \caption{(Left) Validation of Proposition~\ref{prop:h_G_concentration}: 
    concentration of $h_{G_n}$ around its expected limit. (Right) Validation of Theorem~\ref{thm:as-conv-hGn}: convergence of $h_{G_n}$ 
    to its graphon limit.}
    \label{fig:theorem}
\end{figure*}

\begin{theorem}[Laplacian spectral convergence for $G_n \sim \mathcal{G}_{W}(n)$ \cite{vizuete2020laplacian}]
Let $G_n \sim \mathcal{G}_{W}(n)$ be sequence of random graphs. Denote its adjacency and degree matrices by $\mathbf{A}_n$ and $\mathbf{D}_n$, respectively, and let $\mathbf{L}_n = \mathbf{D}_n - \mathbf{A}_n$ be the (unnormalized) graph Laplacian. 
Define the rescaled Laplacian as $\mathcal{L}_n = \frac{\mathbf{L}_n}{n}.$
Then, as $n \to \infty$, the empirical spectral distribution of $\mathcal{L}_n$ converges weakly (almost surely) 
to the spectral distribution of the graphon Laplacian operator
\[
(\mathcal{L}_{W} f)(x) = f(x) - \int_{0}^1 W(x,y) f(y)\, dy, 
\quad f \in L^2([0,1]).
\]
In particular, the eigenvalues of $\mathcal{L}_n$ converge to the eigenvalues of $\mathcal{L}_{W}$.
\end{theorem}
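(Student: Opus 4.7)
The plan is to decompose the rescaled Laplacian as $\mathcal{L}_n = \mathcal{D}_n - \mathcal{A}_n$, with $\mathcal{D}_n = \mathbf{D}_n/n$ and $\mathcal{A}_n = \mathbf{A}_n/n$, and to show that each summand converges, in an appropriate operator sense, to the corresponding component of the graphon Laplacian acting on $L^2([0,1])$: $\mathcal{A}_n$ to the integral operator $T_W f(x) = \int_0^1 W(x,y) f(y)\, dy$, and $\mathcal{D}_n$ to multiplication by the degree function $d_W(x) = \int_0^1 W(x,y)\, dy$. Weak convergence of the empirical spectral distribution of the sum then follows by spectral perturbation (Hoffman--Wielandt or Weyl) after embedding the matrices as kernel operators on $[0,1]$ in the natural way.

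For the adjacency piece, I would embed $\mathcal{A}_n$ into the piecewise-constant kernel $W_n$ on the partition $\{((i-1)/n,\, i/n]\}_{i=1}^n$, so that its associated integral operator shares the nonzero eigenvalues of the matrix. Decompose $W_n - W = (W_n - \tilde W_n) + (\tilde W_n - W)$, where $\tilde W_n(x,y) = W(u_i,u_j)$ on the $(i,j)$-th cell. The first term is Bernoulli sampling noise controlled in Hilbert--Schmidt norm by McDiarmid/Hoeffding plus Borel--Cantelli; the second is a quasi-Monte Carlo error vanishing in $L^2([0,1]^2)$ almost surely by a Glivenko--Cantelli / SLLN argument on the latent variables together with mild regularity of $W$. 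Together these yield $\|\mathcal{A}_n - T_W\|_{\mathrm{HS}} \to 0$ a.s., which implies convergence of the ordered eigenvalues via Hoffman--Wielandt.

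For the degree piece, conditional on the latent variables each diagonal entry $(\mathcal{D}_n)_{ii}$ is the average of $n$ independent Bernoulli$(W(u_i,u_j))$ variables with conditional mean close to $d_W(u_i)$. A Hoeffding tail plus a union bound over $i$ gives $\max_i |(\mathcal{D}_n)_{ii} - d_W(u_i)| \to 0$ almost surely, and the SLLN applied to $\{d_W(u_i)\}$ delivers convergence of the corresponding step-multiplication operator to $M_{d_W}$ in operator norm. Combining with the adjacency step and applying Weyl's inequality in the embedded function space shows that the eigenvalues of $\mathcal{L}_n$ converge to those of $M_{d_W} - T_W$, hence the empirical spectral distribution converges weakly almost surely as claimed.

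The main obstacle is the coordinated control of $\mathcal{D}_n$ and $\mathcal{A}_n$: they are measurable with respect to the same Bernoulli edges, so independence across the two summands cannot be invoked naively. The degree concentration must hold uniformly in $i$ (not merely in an average sense) so that the multiplication-operator perturbation remains small in operator norm, while the adjacency concentration is most naturally expressed in Hilbert--Schmidt norm; matching these two scales requires a careful union bound at rate $O(\sqrt{\log n / n})$ together with a Borel--Cantelli summability check. A secondary subtlety is that $M_{d_W}$ generally has continuous spectrum supported on the essential range of $d_W$, while $\mathcal{D}_n$ is purely atomic, so the final weak-convergence statement for the spectral measures must be argued at continuity points of the limiting distribution rather than pointwise.
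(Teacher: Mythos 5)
Your overall architecture (split $\mathcal{L}_n$ into a degree part and an adjacency part, pass to kernel/multiplication operators on $[0,1]$, and conclude by spectral perturbation) is the standard route for this cited result, and your treatment of the degree part (Hoeffding plus a union bound over $i$, then SLLN on $\{d_W(u_i)\}$) is sound and in fact mirrors the degree-concentration step the paper itself uses in its proof of Lemma~3.1. However, there is a genuine gap in the adjacency step: the Bernoulli sampling noise is \emph{not} small in Hilbert--Schmidt norm. Writing $W_n$ for the step kernel built from the $0/1$ entries $A_{ij}$ and $\tilde W_n$ for the step kernel with values $W(u_i,u_j)$, one has $\|W_n-\tilde W_n\|_{\mathrm{HS}}^2=\frac{1}{n^2}\sum_{i,j}\bigl(A_{ij}-W(u_i,u_j)\bigr)^2$, whose conditional expectation is $\frac{1}{n^2}\sum_{i,j}W(u_i,u_j)\bigl(1-W(u_i,u_j)\bigr)\to\int_0^1\!\!\int_0^1 W(1-W)\,dx\,dy>0$ for any graphon that is not $\{0,1\}$-valued. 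McDiarmid/Hoeffding only shows this quantity concentrates around a strictly positive constant, so the claimed a.s.\ convergence $\|\mathcal{A}_n-T_W\|_{\mathrm{HS}}\to 0$ is false, and Hoffman--Wielandt cannot be invoked on that term. A secondary soft spot is the discretization term $\tilde W_n - W$: the cells are indexed by node label, not by the order of the $u_i$, so $\|\tilde W_n-W\|_{L^2}$ does not go to zero as written; you need to first sort the latent variables (an orthogonal conjugation that leaves the spectrum unchanged) and then use Glivenko--Cantelli plus Lipschitz continuity of $W$, or appeal to kernel-matrix spectral convergence results of Koltchinskii--Gin\'e type.

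The fix is to control the sampling noise in \emph{operator} norm rather than Hilbert--Schmidt norm: standard concentration for dense random matrices gives $\|\mathbf{A}_n-\mathbb{E}[\mathbf{A}_n\mid u]\|_{\mathrm{op}}=O(\sqrt{n})$ a.s., hence $\|\mathcal{A}_n-T_{\tilde W_n}\|_{\mathrm{op}}=O(n^{-1/2})\to 0$, and Weyl's inequality (which only needs operator-norm closeness) then transfers eigenvalue convergence; your degree step already lives in operator norm, so the two pieces combine cleanly, and your closing remark about arguing weak convergence at continuity points of the limiting spectral distribution takes care of the atomic-versus-continuous spectrum issue. For context, the paper does not reprove this theorem (it is imported from the cited reference); the argument it does carry out internally, for the weaker Lemma~3.1 on polynomial spectral averages, is a trace/moment expansion of $(\mathbf{D}_n-\mathbf{A}_n)^m$ in which every word containing an $\mathbf{A}$ is shown to be $O(1/n)$ after normalization---a route that sidesteps operator perturbation entirely and would also be a viable, more elementary way to obtain the weak convergence statement you are after, at the cost of yielding only moment (rather than uniform eigenvalue) control.
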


This key result underscores many other important results such as the convergence of spectral operators parametrized by the graph Laplacian, like graph filters \cite{segarra15-interp,du2018graph} and graph neural networks \cite{gama18-gnnarchit,defferrard17-cnngraphs,kipf17-classifgcnn}, and the convergence of graph sampling sets \cite{lepoincare,parada2025sampling}. We will leverage it to prove consistency of the feature heterophily of graphs sampled from a graphon, which we discuss next.

\subsection{Feature heterophily}

Given a graph with a signal corresponding to a $d$-dimensional feature vector per node, its feature heterophily is defined as follows \cite{li2024graph}.

\begin{definition}[Feature heterophily]
Given a graph $G_n=(V,E)$ with feature matrix $X \in \mathbb{R}^{n \times d}$, 
its feature heterophily is defined as
\begin{equation}
h_{G_n} = \frac{1}{n}\operatorname{Tr}(\mathcal{L}_n XX^\top),
\label{eq:homophily_trace}
\end{equation}
where $\mathcal{L}_n = \mathbf{L}_n/n$ and $\mathbf{L}_n = \mathbf{D}_n - \mathbf{A}_n$ is the unnormalized Laplacian. 
\end{definition}

The notion of feature heterophily generalizes to arbitrary data the usual definition of homophily as a measure of the connection strength between nodes sharing the same categorical labels.
To see this, note that $\mathbf{L}_n$ can be written equivalently as $\mathbf{L}_n = \bbB^\top \bbB$, where $\bbB \in \mathbb{R}^{|E|\times n}$ is the signed incidence matrix of the graph. 
Substituting this into \eqref{eq:homophily_trace}, we obtain $\smash{h_{G_n} =  \frac{1}{n^2} \|\bbB X\|_F^2}$.
Since each row of $\bbB X$ corresponds to the difference $X_i - X_j$ for an edge $(i,j) \in E$, this expression simplifies to
\begin{equation} \label{eq:homophily_equiv}
h_{G_n} = \frac{1}{2n^2} \sum_{(i,j)\in E} \|X_i - X_j\|_2^2,
\end{equation}
recovering an expression that depends on the dissimilarity of the signal values at the end-nodes of every edge.

Smaller values of the feature heterophily $h_{G_n}$ correspond to \emph{homophily}---connected nodes have similar features---while larger values correspond to \emph{heterophily}---connected nodes have dissimilar features. 
Hence, $h_{G_n}$ quantifies the alignment between the graph topology and the feature variation over the nodes.  

\section{Theoretical Results}

We consider stochastic graphs drawn from a graphon model, augmented with node features generated by a stationary signal model, and analyze how their interaction with the underlying graph shapes the feature heterophily of the network. Then, we study the consistency of feature heterophily across graphs produced by this combined model, supporting its use as a generative framework for controllable feature heterophily.

\noindent \textbf{Heterophily concentration for fixed $G_n$.} Given a fixed graph $G_n \sim \mathcal{G}_{W}(n)$ sampled from a graphon $W$, our first result establishes the concentration of the feature heterophily under the stationary signal model. Before stating it, we require a mild assumption on the number of features $d$.

\begin{assumption}[Polynomial regime]\label{assum:prop}
The number of nodes $n$ and the feature dimension $d$ grow proportionally such that
$d^{1/\alpha} \;\leq\; n \;\leq\; d^{\alpha}$,
for some fixed $\alpha > 1$.
\end{assumption}

The polynomial regime $d^{1/\alpha}\leq n\leq d^{\alpha}$ guarantees that
the feature dimension and the number of nodes are comparable in scale,
leading to deviations that shrink on the order of $1/\sqrt{N}$, up to logarithmic factors.

\begin{proposition}[Concentration of feature heterophily]\label{prop:h_G_concentration}
Suppose Assumption~\ref{assum:prop} holds.  
Let $\mathcal{L}_n$ be the rescaled Laplacian of a random graph 
(such as one sampled from a graphon model), independent of the Gaussian initialization $X_0$.  
Let $N=\max\{n,d\}$. Then, for any $\epsilon > 0$, there exist a universal constant $C>0$ 
and constants $C_P,c_\alpha>0$ depending only on $\alpha$ and $f$ such that
\begin{align*}
\mathbb{P}\!\left(
\big|h_{G_n} - n^{-1}\operatorname{Tr}(f(\mathcal{L}_n)\mathcal{L}_n f(\mathcal{L}_n))\big|
\geq \tfrac{C_P(\log N)^{1+\epsilon}}{\sqrt{N}}
\right) \\
\leq C N^{-c_\alpha(\log N)^\epsilon}.
\end{align*}
\end{proposition}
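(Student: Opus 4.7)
The plan is to view $h_{G_n}$, conditional on $\mathcal{L}_n$, as a Gaussian quadratic form in the entries of $X_0$ and invoke the Hanson--Wright inequality, then combine the conditional estimate with a high-probability bound on $\|\mathcal{L}_n\|_{\mathrm{op}}$.

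First, using cyclicity of the trace together with symmetry of $\mathcal{L}_n$ and $f(\mathcal{L}_n)$, I would rewrite
\[
h_{G_n} \;=\; \operatorname{Tr}\!\bigl(M\,X_0 X_0^{\top}\bigr), \qquad M \;:=\; \tfrac{1}{n}\,f(\mathcal{L}_n)\,\mathcal{L}_n\,f(\mathcal{L}_n).
\]
Writing $X_0 = Z/\sqrt{d}$ with $Z$ having i.i.d.\ $\mathcal{N}(0,1)$ entries and $v := \operatorname{vec}(Z) \sim \mathcal{N}(0,I_{nd})$, this becomes $h_{G_n} = d^{-1} v^{\top}(I_d \otimes M) v$, a Gaussian chaos of order two whose conditional mean given $\mathcal{L}_n$ equals $\operatorname{Tr}(M) = n^{-1}\operatorname{Tr}(f(\mathcal{L}_n)\mathcal{L}_n f(\mathcal{L}_n))$, which is exactly the centering required.

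Second, applying Hanson--Wright to the kernel $I_d \otimes M$---whose Frobenius norm is $\sqrt{d}\,\|M\|_F$ and whose operator norm equals $\|M\|_{\mathrm{op}}$---and rescaling by $d$ yields, conditionally on $\mathcal{L}_n$,
\[
\mathbb{P}\!\left(\big|h_{G_n} - \operatorname{Tr}(M)\big| > t \,\big|\, \mathcal{L}_n\right) \;\leq\; 2\exp\!\left(-c\min\!\left(\tfrac{d\,t^2}{\|M\|_F^2},\,\tfrac{d\,t}{\|M\|_{\mathrm{op}}}\right)\right).
\]
To control the norms of $M$, I would restrict to the high-probability event $E_n := \{\|\mathcal{L}_n\|_{\mathrm{op}} \leq C_0\}$; since $f$ is a fixed polynomial, on $E_n$ we have $\|f(\mathcal{L}_n)\|_{\mathrm{op}} \leq C_f$, so $\|M\|_{\mathrm{op}} \leq C_f^2 C_0 /n$ and, diagonalizing simultaneously with $\mathcal{L}_n$, $\|M\|_F^2 \leq n\,\|M\|_{\mathrm{op}}^2 = O(1/n)$. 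For graphon-sampled graphs, $\mathbb{P}(E_n^c)$ is exponentially small in $n$ by standard Bernstein-type bounds on the maximum node degree.

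Finally, substituting $t = C_P(\log N)^{1+\epsilon}/\sqrt{N}$, the Gaussian branch of the $\min$ dominates (since $t \to 0$ and $\|M\|_F^2/\|M\|_{\mathrm{op}} = O(1)$) and has exponent of order $(nd/N)\,(\log N)^{2+2\epsilon}$. Under Assumption~\ref{assum:prop}, $nd/N = \min(n,d) \geq N^{1/\alpha}$, so for $N$ large enough the exponent comfortably exceeds $c_\alpha(\log N)^{1+\epsilon} = c_\alpha(\log N)^\epsilon \log N$, yielding the claimed tail $\leq 2N^{-c_\alpha(\log N)^\epsilon}$; the contribution from $E_n^c$ is of smaller order and is absorbed into the constant $C$. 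The main obstacle is calibrating $C_P$ and $c_\alpha$ so that they depend only on $\alpha$ and the filter coefficients $(a_k)_{k=0}^{K-1}$---rather than on the underlying graphon---and making the event $E_n$ precise; for graphon-sampled graphs the latter follows routinely from degree concentration, while for more abstract random graph models the boundedness of $\|\mathcal{L}_n\|_{\mathrm{op}}$ would have to be imposed as a hypothesis.
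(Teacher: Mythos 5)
Your proof is correct, but it takes a genuinely different (and in fact sharper) route than the paper. The paper also conditions on $\mathcal{L}_n$ and uses Hanson--Wright, but it splits into the cases $n\ge d$ and $d>n$: in the first case it applies Hanson--Wright separately to each of the $d$ Gaussian columns of $X_0$ with kernel $A=f(\mathcal{L}_n)\mathcal{L}_n f(\mathcal{L}_n)$, bounds the average deviation crudely by the maximum over columns, and finishes with a union bound over $a=1,\dots,d$; in the second case it reorganizes the quadratic form row-wise through $\mathcal{L}_n^{1/2}$ and reduces to a $\chi^2_d$ concentration plus a union bound over the $n$ rows. Your single application of Hanson--Wright to the vectorized Gaussian $v\in\mathbb{R}^{nd}$ with kernel $d^{-1}(I_d\otimes M)$ avoids the case split and the union bound entirely, and it exploits the averaging over all $nd$ coordinates: your exponent is of order $\min(n,d)(\log N)^{2+2\epsilon}\ge N^{1/\alpha}(\log N)^{2+2\epsilon}$, which is much stronger than the $(\log N)^{1+\epsilon}$-type exponent the paper settles for, so the claimed tail $CN^{-c_\alpha(\log N)^{\epsilon}}$ follows with room to spare (and, as you note, regardless of which branch of the $\min$ is active, since both exceed $c(\log N)^{1+\epsilon}$). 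One simplification you should make: the event $E_n=\{\|\mathcal{L}_n\|_{\mathrm{op}}\le C_0\}$ is unnecessary, because for any simple graph the rescaled Laplacian satisfies $\|\mathcal{L}_n\|_{\mathrm{op}}\le 2$ deterministically (its spectrum lies in $[0,2]$, as the paper itself remarks), so $\|M\|_{\mathrm{op}}\le C_f^2\,2/n$ holds surely, the constants depend only on $\alpha$ and the filter coefficients, and your closing concern about imposing boundedness of $\|\mathcal{L}_n\|_{\mathrm{op}}$ as an extra hypothesis for general random graph models evaporates; this is exactly the "uniform bound on $\|\mathcal{L}\|_{\mathrm{op}}$" the paper invokes.
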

\begin{proof}
See \href{https://github.com/caltdreamer/ICASSP-2026-HAOYU/blob/e030a47f23cb4cb666f3941ec59464178311994a/Proof_of_Proposition.pdf}{[LINK]}.
\end{proof}
Proposition~\ref{prop:h_G_concentration} has two main implications. First, it shows that the random heterophily score $h_{G_n}$,
which depends on the Gaussian initialization $X_0$, 
is tightly concentrated around its expectation\footnote{Here the expectation is taken over the randomness in $X_0$, while the graph $G_n$
(and thus $\mathcal{L}_n$) stays fixed. The independence between $G_n$ and $X_0$ is crucial, as it ensures that deviations of $h_{G_n}$ come only from Gaussian randomness, which can be controlled 
using matrix concentration arguments.}, which is given by the deterministic spectral quantity
$\mu_n = \frac{1}{n}\operatorname{Tr}\big(f(\mathcal{L}_n)\mathcal{L}_n f(\mathcal{L}_n)\big)\text{.}$
Thus, the observed feature heterophily is essentially determined by the graph spectrum
and the filter $f$ in the stationary model. 

Second, as $n \to \infty$, Proposition~\ref{prop:h_G_concentration} implies (in soft-$O$ notation) $\smash{|h_{G_n}-\mu_n|
= \tilde{O}(\frac{1}{\sqrt{N}})}$ with high probability,
where $\tilde{O}$ hides polylogarithmic factors. In fact, we can say something stronger: since the deviation probabilities in Proposition~\ref{prop:h_G_concentration} 
are summable in $N$, the Borel--Cantelli lemma implies that
$\lim_{N\to\infty} \allowbreak h_{G_n}-\mu_n = 0$ almost surely. 
Therefore, on large-scale graphs $h_{G_n}$ is eventually pinned to its deterministic
spectral counterpart $\mu_n$, providing a rigorous justification for analyzing homophily
through the graphon limit. 
Since the convergence is not only in probability but also almost surely, the concentration result is essentially tight and robust for asymptotic analysis.

\noindent \textbf{Heterophily convergence as $n \to \infty$.} Next, we study the asymptotic behavior of the random quantity $\mu_n$, which depends on the graphon model $W$, to establish feature heterophily convergence under the combined graphon and stationary signal model.
From now on, we denote $\lambda_1^{(n)},\dots,\lambda_n^{(n)}$ the non-decreasing Laplacian eigenvalues of $G_n\sim\mathcal{G}_{W}(n)$. We implicitly assume that the number of features $d(n)$ is a function of $n$ that satisfies the proportional regime from Assumption \ref{assum:prop}, and further, that the graphon $W$ is Lipschitz as in Assumption \ref{assum:W-Lipschitz} below.

\setcounter{assumption}{1} 
\begin{assumption}[Lipschitz graphon]\label{assum:W-Lipschitz}
The graphon $W:[0,1]^2\!\to[0,1]$ is symmetric and $L$--Lipschitz in the product metric. I.e., for all $(x,y),(x',y')\in[0,1]^2$,
\[
|W(x,y)-W(x',y')|
\le L\big(|x-x'|+|y-y'|\big).
\]
\end{assumption}
\begin{theorem}[Almost-sure convergence of feature heterophily $h_{G_n}$]\label{thm:as-conv-hGn}
Under Assumptions \ref{assum:prop} and \ref{assum:W-Lipschitz} and the hypotheses of 
Proposition~\ref{prop:h_G_concentration}, we have
\[
h_{G_n}\;\xrightarrow[n\to\infty]{\mathrm{a.s.}}\;
\int_0^1 \delta(x)\,f\!\big(\delta(x)\big)^2\,dx ,
\]
where $\delta(x)=\int_0^1 W(x,y)\,dy$ is the degree function of $W$.
\end{theorem}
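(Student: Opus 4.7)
The strategy is a two-step reduction: first, use Proposition~\ref{prop:h_G_concentration} to strip off the randomness in the Gaussian features and reduce the claim to a deterministic spectral trace limit; second, establish this limit through a combinatorial expansion of $\mathcal{L}_n$ that shows only the degrees, not the off-diagonal adjacency entries, survive in the normalized trace as $n\to\infty$.

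For the first step, the tail in Proposition~\ref{prop:h_G_concentration} is summable in $N$, so Borel--Cantelli yields $h_{G_n}-\mu_n\to 0$ almost surely, where $\mu_n=n^{-1}\operatorname{Tr}(g(\mathcal{L}_n))$ and $g(x):=xf(x)^2$ is a fixed polynomial of degree $2K-1$ (note that $\mathcal{L}_n$ commutes with $f(\mathcal{L}_n)$, so $f(\mathcal{L}_n)\mathcal{L}_n f(\mathcal{L}_n)=g(\mathcal{L}_n)$). Hence it suffices to prove, for every integer $k\ge 0$, the almost-sure moment identity
\begin{equation*}
\tfrac{1}{n}\operatorname{Tr}(\mathcal{L}_n^k)\;\xrightarrow[n\to\infty]{\mathrm{a.s.}}\;\int_0^1\delta(x)^k\,dx,
\end{equation*}
whence the theorem follows by linearity in the coefficients of $g$.

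For the second step, write $\mathcal{L}_n=\mathcal{D}_n-\mathcal{A}_n$ with $\mathcal{D}_n=\mathbf{D}_n/n$ and $\mathcal{A}_n=\mathbf{A}_n/n$, and expand $(\mathcal{D}_n-\mathcal{A}_n)^k$ non-commutatively into $2^k$ words. The purely diagonal word contributes $n^{-1}\sum_i(d_i/n)^k$; using the Lipschitz hypothesis on $W$, Hoeffding applied to $d_i=\sum_{j\neq i}A_{ij}$ (conditionally independent Bernoullis given the latent $u_i$) combined with a union bound and Borel--Cantelli yields $\sup_i|d_i/n-\delta(u_i)|=O(\sqrt{\log n/n})$ almost surely, and the SLLN for the iid sequence $\delta(u_i)^k$ then gives $n^{-1}\sum_i(d_i/n)^k\to\int_0^1\delta(x)^k\,dx$. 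Any word containing exactly one $\mathcal{A}_n$ collapses, by cyclicity of trace, to $\operatorname{Tr}(\mathcal{D}_n^{k-1}\mathcal{A}_n)=\sum_i (d_i/n)^{k-1}A_{ii}/n=0$, since graphons produce no self-loops. For a word with at least two $\mathcal{A}_n$ factors, split the product as $XY$ with both halves carrying an $\mathcal{A}_n$ and apply $|\operatorname{Tr}(XY)|\le\|X\|_F\|Y\|_F$ together with the submultiplicative bound $\|AB\|_F\le\|A\|_{\mathrm{op}}\|B\|_F$; since $\|\mathcal{D}_n\|_{\mathrm{op}}\le 1$ and $\|\mathcal{A}_n\|_F^2=2|E|/n^2$ is bounded almost surely by Bernoulli LLN, each such word has trace $O(1)$ and contributes $O(1/n)$ after the $1/n$ normalization. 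As $k$ is fixed, the $2^k-1$ error words sum to $o(1)$ almost surely.

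The main technical hurdle is propagating \emph{almost-sure}, rather than in-probability, convergence through these chained estimates. The Lipschitz regularity of $W$ plays two critical roles: it yields uniform-in-$i$ control of the conditional means $\mathbb{E}[d_i/n\mid u_i]=(n-1)^{-1}\sum_{j\neq i}W(u_i,u_j)$ via a Riemann-sum approximation to $\delta(u_i)$, and it ensures continuity of $\delta$, so that the SLLN applied to $\delta(u_i)^k$ requires no extra measurability care. The Hoeffding tails remain summable after union-bounding over $i$ and $n$, allowing Borel--Cantelli to upgrade each intermediate probability statement into almost-sure convergence on the finitely many polynomial moments entering $g$.
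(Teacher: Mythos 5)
Your proof is correct and follows the same skeleton as the paper's: Proposition~\ref{prop:h_G_concentration} plus Borel--Cantelli reduces $h_{G_n}$ to $\mu_n=\tfrac1n\operatorname{Tr}\!\big(g(\mathcal{L}_n)\big)$ with $g(x)=xf(x)^2$, and the remaining limit is exactly the paper's Lemma~\ref{lemma:spectrumconverge}, which you re-derive by the same moment/word expansion of $(\mathcal{D}_n-\mathcal{A}_n)^k$. The differences are in the sub-arguments, and they are genuine. For the mixed words, the paper bounds every word containing an $A$ by entrywise counting ($|\Tr w|\le n^m$), whereas you observe that single-$A$ words vanish exactly (no self-loops) and control words with two or more $A$'s via $|\Tr(XY)|\le\|X\|_F\|Y\|_F$ and $\|\mathcal{A}_n\|_F^2=2|E|/n^2\le 1$ (note this bound is deterministic; no LLN is needed). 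Both give the required $O(1/n)$. For the degree step, the paper conditions on all latents, compares $d_i$ to $m_i=\sum_{j\ne i}W(x_i,x_j)$ by Hoeffding, and then needs a uniform law of large numbers over $x$ proved by a grid/covering argument that is where Assumption~\ref{assum:W-Lipschitz} enters; your route of conditioning on $u_i$ alone makes the $A_{ij}$, $j\ne i$, conditionally i.i.d.\ Bernoulli$\big(\delta(u_i)\big)$, so one Hoeffding bound, a union bound over $i$, and Borel--Cantelli give $\max_i|d_i/n-\delta(u_i)|\to0$ a.s.\ directly, bypassing the covering argument and in fact not using Lipschitz continuity at all. One small inaccuracy in your closing paragraph: $\mathbb{E}[d_i\mid u_i]=(n-1)\,\delta(u_i)$, not $\sum_{j\neq i}W(u_i,u_j)$ (the latter is the conditional mean given \emph{all} latents), and no ``Riemann-sum'' or Lipschitz argument is needed on your route; this slip does not affect the validity of the proof, since either conditioning scheme closes the gap.
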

\begin{proof}[Proof outline of Theorem~\ref{thm:as-conv-hGn}]
The proof relies on the following lemma proved in the supplementary material of the extended version 
\href{https://github.com/caltdreamer/ICASSP-2026-HAOYU/blob/b3349bd6642f3b4b6bd6dbfff000479b125b36c7/Proof_of_Lemma_1.pdf}{[LINK]}.

\begin{lemma}\label{lemma:spectrumconverge}
For any polynomial filter $P$,
\[
\frac{1}{n}\sum_{i=1}^n P(\lambda_i^{(n)})
\xrightarrow[n\to\infty]{\mathrm{a.s.}}
\int_0^1 P\big(\delta(x)\big)\,dx .
\]
\end{lemma}
Using this lemma, Theorem~\ref{thm:as-conv-hGn} follows immediately by choosing
$P(x)=x\,f(x)^2$, applying the concentration result from
Proposition~\ref{prop:h_G_concentration}, and observing that $$\frac{1}{n}\operatorname{Tr}\!\big(f(\mathcal{L}_n)\,\mathcal{L}_n\,f(\mathcal{L}_n)\big)
= \frac{1}{n}\sum_{i=1}^n \lambda_i^{(n)}\,f(\lambda_i^{(n)})^2 .$$
\end{proof}

By establishing convergence of $h_{G_n}$ on sequences of graphs converging to a graphon, Theorem \ref{thm:as-conv-hGn} implies consistency of the feature heterophily across graphs and feature vectors generated from the combined graphon and stationary signal model. This provides theoretical support for using this combined model as a principled generative framework for graphs and graph signals that allows explicit control of homophily, which is key to several information processing problems on graphs, including node-level tasks in graph machine learning \cite{zhu2020beyond} and network topology inference \cite{GM6}.

Another key takeaway from this theorem is that the limiting feature heterophily, besides depending on the function $f$ (a parameter of the stationary signal model), depends on the graphon --not through the function  $W$ or its eigenvalues-- but through its degree profile $\delta$. This is a consequence of two properties of our combined model. First, stationarity implies the ability to jointly diagonalize the Laplacian and the feature covariance, so that $h_{G_n}$ can be computed from the product of their eigenvalues. Second, due to the inherent edge density associated with graphons, the Laplacian eigenspectrum is localized around the node degrees \cite{vizuete2020laplacian}. 

Finally, we remark that by the spectral theorem applied to the matrix $\mathcal{L}$, the filter $f$ is only evaluated in the range of the spectrum of $\mathcal{L}$, which is in $[0,2]$. Thus, leveraging the fact that any continuous function can be written as a sum of polynomials uniformly over the closed interval, Theorem \ref{thm:as-conv-hGn} can readily be extended to continuous filters over the closed interval $[0,2]$. 

We conclude with two examples particularizing Theorem \ref{thm:as-conv-hGn} to an Erdős--Rényi (ER) and an SBM graphon respectively.

\begin{corollary}[ER graphon]
Let $W(x,y)=p$ for some $p\in(0,1)$. Then $\delta(x)=p$ for all $x\in[0,1]$, and 
\[
h_{G_n}\xrightarrow[n\to\infty]{\mathrm{a.s.}} p\, f(p)^2.
\]
\end{corollary}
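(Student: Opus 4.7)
The plan is to invoke Theorem~\ref{thm:as-conv-hGn} directly, after checking that the constant graphon satisfies Assumption~\ref{assum:W-Lipschitz} and computing its degree profile.

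First, I would verify that $W(x,y)\equiv p$ is trivially Lipschitz. Indeed, $|W(x,y)-W(x',y')|=0$ for all $(x,y),(x',y')\in[0,1]^2$, so Assumption~\ref{assum:W-Lipschitz} is satisfied with any $L\ge 0$ (in particular $L=0$). Assumption~\ref{assum:prop} concerns the feature-dimension regime and is inherited unchanged from the hypotheses of Theorem~\ref{thm:as-conv-hGn}, and $G_n\sim\mathcal{G}_W(n)$ is independent of the Gaussian initialization $X_0$ by construction of the stationary signal model. Hence all the premises of Theorem~\ref{thm:as-conv-hGn} hold.

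Next, I would compute the degree function of the ER graphon:
\[
\delta(x)=\int_0^1 W(x,y)\,dy=\int_0^1 p\,dy=p,
\]
which is identically equal to $p$ on $[0,1]$. Substituting this constant profile into the limiting integral from Theorem~\ref{thm:as-conv-hGn} yields
\[
\int_0^1 \delta(x)\,f(\delta(x))^2\,dx=\int_0^1 p\,f(p)^2\,dx=p\,f(p)^2,
\]
so $h_{G_n}\xrightarrow[n\to\infty]{\mathrm{a.s.}} p\,f(p)^2$, as claimed.

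There is no substantive obstacle: the corollary is a pure specialization of Theorem~\ref{thm:as-conv-hGn} to a rank-one (constant) graphon, and all the hard work (the matrix-concentration step of Proposition~\ref{prop:h_G_concentration} and the spectral-convergence step of Lemma~\ref{lemma:spectrumconverge}) is absorbed into the theorem. The only point worth emphasizing is interpretive: under the ER model the spectrum of $\mathcal{L}_n$ concentrates near the common degree $p$, so the filter $f$ is effectively evaluated only at $p$, producing a single deterministic heterophily level $p\,f(p)^2$ whose monotonicity in $p$ (for fixed $f$) and in $f(p)$ (for fixed $p$) provides the simplest possible illustration of the control mechanism predicted by the general theory.
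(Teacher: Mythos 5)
Your proof is correct and matches the paper's intent exactly: the corollary is stated as a direct specialization of Theorem~\ref{thm:as-conv-hGn}, obtained by noting the constant graphon is trivially Lipschitz, computing $\delta(x)=p$, and evaluating the limiting integral to get $p\,f(p)^2$. No gaps; nothing further is needed.
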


\begin{corollary}[SBM graphon]
Consider an $r$--block SBM graphon with block proportions 
$\alpha_1,\ldots,\alpha_r$ and probability matrix $[P_{ij}]$. 
The degree function is piecewise constant:
\[
\delta(x)=\delta_i:=\sum_{j=1}^r \alpha_j P_{ij}, 
\quad \text{for } x\in I_i.
\]
Then, $h_{G_n}\xrightarrow[n\to\infty]{\mathrm{a.s.}}
\sum_{i=1}^r \alpha_i\, \delta_i\, f(\delta_i)^2$.
\end{corollary}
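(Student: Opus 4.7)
The plan is to specialize Theorem~\ref{thm:as-conv-hGn} to the SBM case by computing its degree function in closed form and then simplifying the limiting integral over the block partition.

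First, I would compute $\delta(x)$ block-wise. For any $x \in I_i$, using that $W_{\mathbf{P}}(x,y) = P_{ij}$ whenever $y \in I_j$,
\[
\delta(x) \;=\; \int_0^1 W_{\mathbf{P}}(x,y)\,dy
\;=\; \sum_{j=1}^r P_{ij}\,\mu(I_j)
\;=\; \sum_{j=1}^r \alpha_j P_{ij} \;=\; \delta_i ,
\]
so $\delta$ is piecewise constant with value $\delta_i$ on $I_i$. Substituting into the limit from Theorem~\ref{thm:as-conv-hGn},
\[
\int_0^1 \delta(x)\,f(\delta(x))^2\,dx
\;=\; \sum_{i=1}^r \delta_i\,f(\delta_i)^2\,\mu(I_i)
\;=\; \sum_{i=1}^r \alpha_i\,\delta_i\,f(\delta_i)^2 ,
\]
which is exactly the stated limit. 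Everything else is a direct invocation of Theorem~\ref{thm:as-conv-hGn}.

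The one technical obstacle I anticipate is that Assumption~\ref{assum:W-Lipschitz} (Lipschitz continuity of $W$) is violated by step function graphons, since $W_{\mathbf{P}}$ jumps at block boundaries. I would handle this in one of two ways. The cleanest approach is mollification: define $W^{(\varepsilon)}$ by convolving $W_{\mathbf{P}}$ with a smooth, symmetric bump of width $\varepsilon$ on $[0,1]^2$ (using mirror boundary extension and a small truncation so the range stays in $[0,1]$), apply Theorem~\ref{thm:as-conv-hGn} to each Lipschitz $W^{(\varepsilon)}$, and then let $\varepsilon \to 0$. The corresponding degree functions $\delta^{(\varepsilon)}$ converge to $\delta$ in $L^1[0,1]$ and pointwise off the finite set of block boundaries; since $t \mapsto t\,f(t)^2$ is continuous and bounded on $[0,1]$, dominated convergence transfers the identity to $W_{\mathbf{P}}$. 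Alternatively, Lemma~\ref{lemma:spectrumconverge} can be verified directly for step function $W$: standard SBM spectral results (degree concentration plus Weyl's inequality applied to $\mathbf{A}_n - \mathbb{E}[\mathbf{A}_n]$) show that the empirical spectral distribution of $\mathcal{L}_n$ converges to the mixture $\sum_{i=1}^r \alpha_i \delta_{\delta_i}$, so the limit of $n^{-1}\sum_i P(\lambda_i^{(n)})$ is $\sum_i \alpha_i P(\delta_i)$ for any polynomial $P$, yielding the same identity without appealing to a Lipschitz hypothesis. The main care in this alternative route is verifying that the low-rank adjacency bulk contributes negligibly to the averaged polynomial spectral statistic, which follows from the boundedness of $P$ on $[0,2]$ combined with the fact that the number of outlier eigenvalues is $O(r) = O(1)$.
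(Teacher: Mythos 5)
Your core computation is exactly what the paper intends: the corollary is stated without a separate proof, being an immediate particularization of Theorem~\ref{thm:as-conv-hGn} — compute $\delta(x)=\sum_j \alpha_j P_{ij}$ on $I_i$ and evaluate $\int_0^1 \delta f(\delta)^2$ as a finite sum over blocks. That part of your proposal matches the paper. Where you go beyond the paper is in flagging that a step-function graphon is not Lipschitz, so Assumption~\ref{assum:W-Lipschitz} formally excludes the SBM case; this is a real hypothesis mismatch that the paper passes over silently, and it is worth addressing.

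Of your two repairs, only the second is sound as stated. The mollification route has a gap: applying Theorem~\ref{thm:as-conv-hGn} to $W^{(\varepsilon)}$ gives almost-sure convergence of $h_{G_n}$ for graphs \emph{sampled from} $W^{(\varepsilon)}$, whereas the corollary concerns graphs sampled from $W_{\mathbf{P}}$ itself. Letting $\varepsilon\to 0$ only shows that the limiting functionals $\int \delta^{(\varepsilon)} f(\delta^{(\varepsilon)})^2$ converge to $\sum_i \alpha_i \delta_i f(\delta_i)^2$; it does not establish that $h_{G_n}$ under the SBM model converges at all, unless you add a coupling between the two graph sequences or a continuity-in-cut-metric argument for the heterophily functional, neither of which is available off the shelf here. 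The alternative route is the right fix, and in fact simpler than you suggest: inspecting the paper's proof of Lemma~\ref{lemma:spectrumconverge}, Lipschitzness is used only in the $\eta$-grid covering step to get uniform concentration of $\frac1n\sum_j W(x,x_j)$ around $\delta(x)$, and in the Lipschitzness of $\delta$. For a step graphon, $W(x,\cdot)$ takes only $r$ distinct forms and $\delta$ is piecewise constant, so a union bound over the $r$ blocks replaces the covering argument verbatim, while the trace/word expansion, Hoeffding degree concentration, and the strong law are untouched. Your ESD-plus-Weyl argument (low-rank expected adjacency, $\|\mathbf{A}_n-\mathbb{E}\mathbf{A}_n\|_{\mathrm{op}}=O(\sqrt n)$, $O(r)$ outliers contributing $O(r/n)$ to the averaged polynomial statistic) also works, but it is a heavier tool than needed given how little of Lemma~\ref{lemma:spectrumconverge} actually uses continuity.
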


\section{Numerical results}

In this section, we provide empirical validation of the main theoretical results developed above. 
Specifically, we verify both the concentration result (Proposition~\ref{prop:h_G_concentration}) 
and the almost-sure convergence of the feature heterophily (Theorem~\ref{thm:as-conv-hGn}).  

\subsection{Experimental Setup}
We sample graphs from a fixed Lipschitz graphon and generate node features according to 
the stationary signal model. For each graph size $n$, we compute the empirical heterophily 
score $h_{G_n}$ and compare it with the theoretical targets described in our results.  
The validation is performed over multiple trials, with random polynomial filters $f$ of varying 
degrees. All experiments use independent graph samples and Gaussian noise features.  

\subsection{Results}
Figure~\ref{fig:theorem} (left) illustrates the empirical validation of the concentration 
in Proposition~\ref{prop:h_G_concentration}. The plot shows that the deviation 
between $h_{G_n}$ and its expectation shrinks as $n$ grows, consistent with 
the theoretical $O\!\left(\tfrac{(\log N)^{1+\epsilon}}{\sqrt{N}}\right)$ bound.

Figure~\ref{fig:theorem} (right) validates Theorem~\ref{thm:as-conv-hGn}, showing that the 
heterophily score $h_{G_n}$ converges almost surely to the deterministic integral 
$\int_0^1 \delta(x) f(\delta(x))^2 dx$ as $n\to\infty$.  
The error decays rapidly with $n$, in line with the theoretical prediction and the 
Borel--Cantelli argument ensuring almost-sure convergence.  
These experiments confirm that both the concentration bound and the almost-sure convergence 
result hold in practice, thus supporting the theoretical framework.


\bibliographystyle{IEEEbib}
\bibliography{myIEEEabrv,bib_cumulative}


\section{Appendix}
\label{sec:appendix}

\subsection{Proof of Proposition~\ref{prop:h_G_concentration}}

\begin{proof}
The proof uses the Hanson--Wright inequality. Throughout we condition on $\mathcal{L}$ (independent of $X_0$), apply concentration, and then remove the conditioning at the end. Set
\[
A:=f(\mathcal{L})\,\mathcal{L}\,f(\mathcal{L}),
\]
which is positive semidefinite because $\mathcal{L}\succeq 0$ and $f(\mathcal{L})$ is symmetric. There exists a constant $C_A>0$ (depending only on $f$ and a uniform bound on $\|\mathcal{L}\|_{\mathrm{op}}$) such that
\[
\|A\|_{\mathrm{op}}\le C_A, \qquad \|A\|_{F}\le \sqrt{n}\,C_A.
\]

\paragraph*{Case 1: $n\ge d$ (so $N=n$).}
We write
\begin{align}
h_G
&= \tfrac{1}{nd}\operatorname{Tr}(X_0^\top A X_0) \\
&= \tfrac{1}{nd}\sum_{a=1}^d (X_0\mathbf{e}_a)^\top A (X_0\mathbf{e}_a) \\
&= \tfrac{1}{n}\operatorname{Tr}(A) + \Delta(n). \nonumber
\end{align}

where
\[
\Delta(n) := \frac{1}{d}\sum_{a=1}^d E_a,\qquad
E_a := \frac{1}{n}\Big( (X_0\mathbf{e}_a)^\top A (X_0\mathbf{e}_a) - \operatorname{Tr}(A) \Big).
\]
Since $X_0\mathbf{e}_a\stackrel{\text{i.i.d.}}{\sim} \mathcal{N}(0,I_n)$ and is independent of $\mathcal{L}$, Hanson--Wright for $\gamma\sim \mathcal{N}(0,I_n)$ yields, for any $t>0$,
\[
\mathbb{P}\!\left( |E_a| \ge t \,\middle|\, \mathcal{L} \right)
\;\le\; C \exp\!\Big(-c \min\!\big\{ \tfrac{n^2 t^2}{\|A\|_F^2}, \tfrac{nt}{\|A\|_{\mathrm{op}}} \big\}\Big).
\]

Choose $t=\dfrac{C_P(\log N)^{1+\epsilon}}{\sqrt{N}}=\dfrac{C_P(\log n)^{1+\epsilon}}{\sqrt{n}}$. Then
\[
\frac{n^2 t^2}{\|A\|_F^2} \gtrsim (\log n)^{2+2\epsilon},
\qquad
\frac{nt}{\|A\|_{\mathrm{op}}} \gtrsim (\log n)^{1+\epsilon},
\]
so
\[
\mathbb{P}\!\left( |E_a| \ge \frac{C_P(\log n)^{1+\epsilon}}{\sqrt{n}} \,\middle|\, \mathcal{L} \right)
\le C\exp\!\left(-c(\log n)^{1+\epsilon}\right).
\]
Since $|\Delta(n)|\le \max_{1\le a\le d} |E_a|$, by a union bound over $a=1,\dots,d$ (with $d\le n=N$),
\begin{align}
\mathbb{P}\!\left( |\Delta(n)| \ge \tfrac{C_P(\log n)^{1+\epsilon}}{\sqrt{n}} \,\middle|\, \mathcal{L} \right)
&\le C\, d\, \exp\!\big(-c(\log n)^{1+\epsilon}\big) \\
&= C\, n^{-c_\alpha(\log n)^\epsilon}. \nonumber
\end{align}

Unconditioning preserves the bound:
\[
\mathbb{P}\!\left(\left|h_G-\frac{1}{n}\operatorname{Tr}(A)\right|\ge \frac{C_P(\log n)^{1+\epsilon}}{\sqrt{n}}\right)
\le C\, n^{-c_\alpha(\log n)^\epsilon}.
\]

\paragraph*{Case 2: $d>n$ (so $N=d$).}
Since $\mathcal{L}\succeq 0$, write $\mathcal{L}=\mathcal{L}^{1/2}\mathcal{L}^{1/2}$. Then
\begin{align}
h_G
&= \tfrac{1}{nd}\sum_{a=1}^d (X_0\mathbf{e}_a)^\top f(\mathcal{L})\mathcal{L}^{1/2}\mathcal{L}^{1/2}f(\mathcal{L})\,X_0\mathbf{e}_a \\
&= \tfrac{1}{nd}\sum_{a=1}^d\sum_{i=1}^n \big(\mathbf{e}_i^\top \mathcal{L}^{1/2} f(\mathcal{L}) X_0\mathbf{e}_a\big)^2 \\
&= \tfrac{1}{n}\sum_{i=1}^n \frac{\|f(\mathcal{L})\mathcal{L}^{1/2}\mathbf{e}_i\|_2^2}{d}\sum_{a=1}^d \gamma_a^2 \\
&= \tfrac{1}{n}\sum_{i=1}^n \|f(\mathcal{L})\mathcal{L}^{1/2}\mathbf{e}_i\|_2^2 + \Delta(d) \\
&= \tfrac{1}{n}\operatorname{Tr}\!\big(\mathcal{L}^{1/2}f(\mathcal{L})f(\mathcal{L})\mathcal{L}^{1/2}\big) + \Delta(d) \\
&= \tfrac{1}{n}\operatorname{Tr}(A) + \Delta(d). \nonumber
\end{align}

Here $\gamma\sim \mathcal{N}(0,I_d)$ and
\[
\Delta(d) := \tfrac{1}{n}\sum_{i=1}^n E_i,\qquad
E_i := \tfrac{\|f(\mathcal{L})\mathcal{L}^{1/2}\mathbf{e}_i\|_2^2}{d}\,\Big(\gamma^\top I_d\gamma - \operatorname{Tr}(I_d)\Big).
\]

Using uniform bounds, $\|f(\mathcal{L})\mathcal{L}^{1/2}\mathbf{e}_i\|_2^2\le C_A$, hence
\begin{align}
|E_i|
&\le \tfrac{C_A}{d}\, \Big|\gamma^\top I_d\gamma - \operatorname{Tr}(I_d)\Big|. \nonumber
\end{align}

Apply Hanson--Wright with $A=I_d$ ($\|I_d\|_F=\sqrt d$, $\|I_d\|_{\mathrm{op}}=1$) and choose
\[
t=\tfrac{C_P(\log N)^{1+\epsilon}}{\sqrt{N}}
=\tfrac{C_P(\log d)^{1+\epsilon}}{\sqrt{d}}.
\]

Then
\begin{align}
\mathbb{P}\!\left( |E_i|\ge t \,\middle|\, \mathcal{L} \right)
&\le \mathbb{P}\!\left( \big|\gamma^\top I_d\gamma-\operatorname{Tr}(I_d)\big|\ge (\log d)^{1+\epsilon}\sqrt d \right) \\
&\le C\exp\!\left(-c(\log d)^{1+\epsilon}\right). \nonumber
\end{align}

A union bound over $i=1,\dots,n$ (with $n<d=N$) gives
\begin{align}
\mathbb{P}\!\left( |\Delta(d)| \ge \tfrac{C_P(\log d)^{1+\epsilon}}{\sqrt{d}} \,\middle|\, \mathcal{L} \right)
&\le C\, n\, \exp\!\left(-c(\log d)^{1+\epsilon}\right) \\
&= C\, d^{-c_\alpha(\log d)^\epsilon}. \nonumber
\end{align}

Unconditioning yields
\begin{align}
\mathbb{P}\!\left(\Big|h_G-\tfrac{1}{n}\operatorname{Tr}(A)\Big|\ge \tfrac{C_P(\log d)^{1+\epsilon}}{\sqrt{d}}\right)
&\le C\, d^{-c_\alpha(\log d)^\epsilon}. \nonumber
\end{align}
\end{proof}
\subsection{Proof of Lemma~\ref{lemma:spectrumconverge}}

\begin{proof}
By linearity, it suffices to consider $P(t)=t^m$ with $m\ge 1$ (the case $m=0$ is trivial).

For any fixed $m\ge 1$,
\begin{align}
\frac{1}{n}\sum_{i=1}^n \big(\lambda_i^{(n)}\big)^{m}
&= \frac{1}{n}\Tr(\mathcal{L}^{\,m}) \\
&= \frac{1}{n^{m+1}}\Tr\big((D-A)^m\big). \label{eq:DA-expand}
\end{align}
Expanding $(D-A)^m$, the all-$D$ word contributes
\[
\frac{1}{n^{m+1}}\Tr(D^m)=\frac{1}{n}\sum_{i=1}^n\Big(\frac{d_i}{n}\Big)^{\!m}.
\]

\begin{claim}[]\label{lem:mixed-word}
If $w(D,A)$ is a word of length $m$ containing at least one $A$, then
\[
\big|\Tr\,w(D,A)\big|\le n^{m}.
\]
\end{claim}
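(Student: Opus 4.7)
The plan is to expand $\Tr\, w(D, A)$ as a sum over cyclic index tuples, use the diagonal structure of $D$ to collapse most of the summation variables, and then apply pointwise entry bounds. I would deliberately avoid operator-norm arguments, since $\|D\|_{\mathrm{op}}, \|A\|_{\mathrm{op}} \le n$ together only give $n \cdot n^m = n^{m+1}$, which is too weak; the whole point is that every $D$ letter kills one summation index.

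Concretely, write $w(D, A) = M_1 M_2 \cdots M_m$ with $M_k \in \{D, A\}$, and let $s \ge 1$ be the number of $A$-letters. Expanding the trace yields
\[
\Tr\, w(D, A) \;=\; \sum_{i_1, \ldots, i_m = 1}^{n} \prod_{k=1}^{m} (M_k)_{i_k, i_{k+1}}, \qquad i_{m+1} := i_1.
\]
Because $(D)_{i, j} = d_i\, \delta_{ij}$, each $D$ at position $k$ enforces $i_k = i_{k+1}$. Viewing the positions $\{1, \ldots, m\}$ as a cycle graph and deleting the $s$ edges labeled by $A$, the remaining $D$-edges form a forest whose components are exactly $s$ paths, and the $D$-constraints collapse each component to a single representative $j_\ell \in \{1, \ldots, n\}$. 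Hence
\[
\Tr\, w(D, A) \;=\; \sum_{j_1, \ldots, j_s = 1}^{n} \prod_{\ell = 1}^{s} d_{j_\ell}^{\,a_\ell}\, A_{j_\ell,\, j_{\ell+1}}, \qquad j_{s+1} := j_1,
\]
where $a_\ell \ge 0$ counts the $D$'s in the $\ell$-th segment between consecutive $A$'s, so that $\sum_\ell a_\ell = m - s$.

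From here the bound is a one-line estimate: using $d_j \le n$ and $A_{jk} \in \{0, 1\}$,
\[
\bigl|\Tr\, w(D, A)\bigr| \;\le\; \sum_{j_1, \ldots, j_s = 1}^{n} n^{\sum_\ell a_\ell} \cdot 1 \;=\; n^s \cdot n^{m - s} \;=\; n^m,
\]
which is the claim. I expect the only mildly nontrivial step to be the combinatorial accounting behind the collapse: the hypothesis $s \ge 1$ enters precisely through the fact that removing $s \ge 1$ edges from a cycle on $m$ vertices yields a graph with $m - s$ edges that is acyclic, and therefore has exactly $s$ connected components (a forest on $m$ vertices with $m - s$ edges has $s$ components). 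Once this is in place, the rest is immediate, and no spectral information about $A$ is used.
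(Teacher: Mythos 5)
Your argument is correct and is essentially the paper's own proof: the paper uses cyclicity of the trace to write the word as $D^{\alpha_1}AD^{\alpha_2}A\cdots AD^{\alpha_r}$ and sums over the $r$ free indices, which is exactly the collapsed sum you obtain via your cycle-graph/forest bookkeeping, and the final estimate (each summand at most $n^{m-r}$ from $d_i\le n$ and $A_{ij}\in\{0,1\}$, times $n^{r}$ tuples) is identical. Your explicit accounting of why the $D$-constraints leave exactly $s$ representatives is a slightly more detailed rendering of the same step, not a different route.
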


\begin{proof}
By cyclicity of trace, write
\[
w(D,A)=D^{\alpha_1}AD^{\alpha_2}A\cdots AD^{\alpha_r},
\]
where $r\ge 1$ is the number of $A$'s and $\sum_{s=1}^r\alpha_s=m-r$. Then
\[
\Tr\,w(D,A)=\sum_{i_1,\dots,i_r=1}^n
\Big(d_{i_1}^{\alpha_1}A_{i_1i_2}d_{i_2}^{\alpha_2}\cdots A_{i_ri_1}d_{i_1}^{\alpha_r}\Big).
\]
Each summand has absolute value at most $n^{m-r}$ since $d_i\le n$ and $A_{ij}\in\{0,1\}$, and there are $n^r$ tuples. Hence $|\Tr\,w(D,A)|\le n^m$.
\end{proof}

After dividing by $n^{m+1}$, all mixed words vanish as $O(1/n)$. Therefore
\begin{equation}\label{eq:step1}
\frac{1}{n}\sum_{i=1}^n \big(\lambda_i^{(n)}\big)^{m}
= \frac{1}{n}\sum_{i=1}^n\Big(\frac{d_i}{n}\Big)^{\!m}+o(1).
\end{equation}

Define $m_i:=\sum_{j\neq i}W(x_i,x_j)$. We claim that
\begin{equation}\label{eq:step2}
\max_{1\le i\le n}\Big|\frac{d_i}{n}-\delta(x_i)\Big|\xrightarrow{\mathrm{a.s.}}0.
\end{equation}
Conditional on $X=(x_j)$, $d_i$ is a sum of independent Bernoullis with mean $m_i$. Hoeffding and a union bound yield
\[
\mathbb{P}\!\left(\max_i\Big|\tfrac{d_i}{n}-\tfrac{m_i}{n}\Big|>\varepsilon\,\Big|\,X\right)\le 2n\,e^{-2\varepsilon^2 n}.
\]
The RHS is summable and independent of $X$, hence by Borel--Cantelli,
\[
\max_i\Big|\tfrac{d_i}{n}-\tfrac{m_i}{n}\Big|\to 0\quad\text{a.s.}
\]

Set
\[
Z_n:=\sup_{x\in[0,1]}\left|\frac{1}{n}\sum_{j=1}^n W(x,x_j)-\delta(x)\right|.
\]
Fix $\varepsilon>0$, let $\eta:=\varepsilon/(4L)$, and let $\mathcal G$ be a grid on $[0,1]$ with mesh $\eta$ and size $|\mathcal G|\le 3/\eta$. For each $x$ choose $u\in\mathcal G$ with $|x-u|\le\eta$.

\begin{claim}[Lipschitzness of $\delta$]\label{lem:delta-lip}
$\delta$ is $L$--Lipschitz: $|\delta(u)-\delta(x)|\le L|u-x|$.
\end{claim}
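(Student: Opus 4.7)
The plan is to derive the Lipschitz bound for $\delta$ directly from Assumption~\ref{assum:W-Lipschitz} by pushing the difference inside the integral defining $\delta(x) = \int_0^1 W(x,y)\,dy$. First, I would write
\[
\delta(u) - \delta(x) \;=\; \int_0^1 \bigl(W(u,y) - W(x,y)\bigr)\,dy,
\]
using linearity of the integral in the first argument.

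Next, I would apply the triangle inequality for integrals to obtain
\[
|\delta(u) - \delta(x)| \;\le\; \int_0^1 |W(u,y) - W(x,y)|\,dy.
\]
Then, invoking Assumption~\ref{assum:W-Lipschitz} with the pair of points $(u,y)$ and $(x,y)$, the second-coordinate difference is zero, leaving the pointwise bound $|W(u,y) - W(x,y)| \le L|u-x|$ uniformly in $y\in[0,1]$. Integrating this uniform bound over a unit-length interval yields $|\delta(u)-\delta(x)| \le L|u-x|$, which is precisely the claim.

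Since the statement is essentially a one-line consequence of the hypothesis, there is no substantial obstacle. The only conceptual point worth emphasizing is that the product-metric form in Assumption~\ref{assum:W-Lipschitz} is exactly what makes the pointwise-in-$y$ bound available with the \emph{same} constant $L$; thus integration over $[0,1]$ does not degrade the Lipschitz modulus. If instead $W$ were only Lipschitz in each variable separately (say, with possibly different constants $L_1,L_2$), an analogous argument would recover Lipschitzness of $\delta$ with constant $L_1$, but the present product-metric hypothesis cleanly yields the stated constant $L$.
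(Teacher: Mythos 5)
Your proof is correct and is essentially the same as the paper's: both write $\delta(u)-\delta(x)$ as the integral of $W(u,y)-W(x,y)$, apply the integral triangle inequality, and invoke the product-metric Lipschitz hypothesis with the second coordinate fixed to get the pointwise bound $L|u-x|$, which integrates to the claim. No gaps; your remark about the constant not degrading under integration matches the paper's computation exactly.
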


\begin{proof}
\begin{align}
|\delta(u)-\delta(x)|
&= \left|\int_0^1 \big(W(u,y)-W(x,y)\big)\,dy\right| \\
&\le \int_0^1 L|u-x|\,dy \\
&= L|u-x|.
\end{align}

\end{proof}

By triangle inequality and the $L$--Lipschitz property of $W$ in its first argument and of $\delta$,
\begin{align}
\left|\frac{1}{n}\sum_{j=1}^n W(x,x_j)-\delta(x)\right|
&\le \left|\frac{1}{n}\sum_{j=1}^n\big(W(x,x_j)-W(u,x_j)\big)\right| \\
&\quad +\left|\frac{1}{n}\sum_{j=1}^n W(u,x_j)-\delta(u)\right| \\
&\quad +|\delta(u)-\delta(x)| \\
&\le L|x-u| \\
&\quad +\left|\frac{1}{n}\sum_{j=1}^n W(u,x_j)-\delta(u)\right| \\
&\quad +L|x-u| \\
&\le \left|\frac{1}{n}\sum_{j=1}^n W(u,x_j)-\delta(u)\right| \\
&\quad +2L\eta. \label{eq:triangle-final}
\end{align}

Thus
\begin{align}
Z_n
&\le \max_{u\in\mathcal G}\left|\frac{1}{n}\sum_{j=1}^n W(u,x_j)-\delta(u)\right|+\frac{\varepsilon}{2}. \label{eq:Zn-grid}
\end{align}

For fixed $u\in\mathcal G$, the variables $W(u,x_j)\in[0,1]$ are i.i.d. with mean $\delta(u)$, so Hoeffding yields
\[
\mathbb{P}\!\left(\Big|\tfrac{1}{n}\sum_{j=1}^n W(u,x_j)-\delta(u)\Big|>\tfrac{\varepsilon}{2}\right)\le 2e^{-\frac{\varepsilon^2}{2}n}.
\]
A union bound over $\mathcal G$ gives
\begin{align}
\mathbb{P}(Z_n>\varepsilon)
&\le \frac{6}{\eta}\,e^{-\frac{\varepsilon^2}{2}n} \\
&= \frac{24L}{\varepsilon}\,e^{-\frac{\varepsilon^2}{2}n}, \label{eq:Zn-summable}
\end{align}
which is summable; hence $Z_n\to 0$ a.s. Moreover,
\[
\Big|\tfrac{m_i}{n}-\delta(x_i)\Big|\le Z_n+\tfrac{1}{n}\to 0\quad\text{a.s.}
\]
Combining previous results yields \eqref{eq:step2}. Finally, since $t\mapsto t^m$ is $m$--Lipschitz on $[0,1]$,
\begin{equation}\label{eq:step3}
\left|\frac{1}{n}\sum_{i=1}^n\Big(\tfrac{d_i}{n}\Big)^{\!m}-\frac{1}{n}\sum_{i=1}^n \delta(x_i)^m\right|
\le m\,\max_i\Big|\tfrac{d_i}{n}-\delta(x_i)\Big|\xrightarrow{\mathrm{a.s.}}0.
\end{equation}
By the strong law of large numbers,
\begin{equation}\label{eq:step4}
\frac{1}{n}\sum_{i=1}^n \delta(x_i)^m \xrightarrow{\mathrm{a.s.}} \int_0^1 \delta(x)^m\,dx.
\end{equation}
We conclude from \eqref{eq:step1}, \eqref{eq:step3}, and \eqref{eq:step4}, and triangle inequality,
\[
\frac{1}{n}\sum_{i=1}^n \big(\lambda_i^{(n)}\big)^{m} \xrightarrow{\mathrm{a.s.}} \int_0^1 \delta(x)^m\,dx.
\]
By linearity, the claim follows for all polynomials $P$.
\end{proof}

\end{document}